\documentclass[runningheads]{llncs}

\usepackage{eccv}

\usepackage{eccvabbrv}

\usepackage{graphicx}
\usepackage{booktabs}
\usepackage[accsupp]{axessibility}  

\usepackage{xspace}
\usepackage{amsmath}
\usepackage{amssymb}
\usepackage{mathtools}
\usepackage[table]{xcolor}
\usepackage{algorithm}
\usepackage{algorithmic}
\usepackage{adjustbox}
\usepackage{tabularx}
\usepackage[table]{xcolor}
\usepackage{subcaption}
\definecolor{gray}{HTML}{efefef}
\newcolumntype{x}[1]{>{\centering\arraybackslash}p{#1}}
\newcolumntype{y}[1]{>{\raggedright\arraybackslash}p{#1}}
\newcolumntype{z}[1]{>{\raggedleft\arraybackslash}p{#1}}
\newcommand{\tablestyle}[2]{\setlength{\tabcolsep}{#1}\renewcommand{\arraystretch}{#2}\centering\footnotesize}
\usepackage{pifont}
\newcommand{\CheckmarkBold}{\ding{51}}  
\newcommand{\XSolidBrush}{\ding{55}}    
\newcommand{\dui}{\CheckmarkBold}
\newcommand{\budui}{\XSolidBrush}

\usepackage[]{hyperref}

\usepackage{orcidlink}

\begin{document}
\newcommand{\modelname}{\texttt{WorldBagel}\xspace}

\title{\modelname: Uncovering the Power of Unified Multimodal Models for Vision-Language-Action-World Modeling}

\titlerunning{\modelname}


\author{Zelin Zhao, Min Shi, Bo Yuan, Haotian Xue, Jialuo Li, Lama Moukheiber, Humphrey Shi, and Yongxin Chen}

\authorrunning{Z.Zhao et al.}

\institute{Georgia Institute of Technology}

\maketitle

\begin{abstract}
World models aim to capture environment dynamics to support perception, reasoning, and action, and have recently risen as a prominent direction in \textbf{V}ision–\textbf{L}anguage–\textbf{A}ction–\textbf{W}orld (VLAW) modeling. While recent unified vision–language models demonstrate unprecedented multimodal generation ability, we argue that their role as world models remains largely uncovered. In this work, we introduce \texttt{WorldBagel}, a unified VLAW framework built on BAGEL, a modern multimodal unified model, and use it to systematically study the impact of unification in VLAW modeling. Through multi-task robotic manipulation and cross-domain experiments, we show that unified models consistently outperform task-specific alternatives, learn higher-quality action representations that are more structured and semantically aligned with visual and linguistic context. Our results on LIBERO, Language Table, and Franka suggest that unification itself benefits learning effective VLAW models, yielding both consistent empirical gains and deeper insights into multimodal world modeling. The code and model checkpoints will become available after acceptance.
\end{abstract}

\section{Introduction}
\label{sec:intro}
World models aim to capture environment dynamics to support perception, reasoning, and decision-making, and have become a central objective in embodied intelligence and robotics research~\cite{ha2018world,hafner2019learning,schmidhuber2015world}. Recent approaches learn latent dynamics models that enable planning, imagination, or policy learning directly from visual observations~\cite{hafner2020dreamer}. These works demonstrate that modeling future states can substantially improve control and sample efficiency. However, most world modeling efforts focus primarily on visual dynamics and control, without deeply integrating language or large-scale multimodal knowledge.

In parallel, large-scale vision–language models (VLMs) have achieved remarkable progress in multimodal representation learning and generation~\cite{alayrac2022flamingo,li2023blip2}. These models align visual and textual modalities at scale, enabling strong zero-shot transfer and reasoning capabilities. Building upon this line of work, vision–language–action (VLA) models extend multimodal learning to embodied control, showing that incorporating language improves multi-task learning and generalization in robotics~\cite{brohan2022rt1,intelligence2025pi05visionlanguageactionmodelopenworld}. Despite their success, VLA systems typically emphasize action prediction and task performance, rather than explicit generative modeling of environment dynamics.

Taken together, these two research directions—multimodal VLA learning and world modeling—suggest a natural convergence. A unified model should not only understand visual observations and follow language instructions, but also generate future visual states and predict actions within a shared representation space. This motivates the \textbf{V}ision–\textbf{L}anguage–\textbf{A}ction–\textbf{W}orld (VLAW) framework, where a single model jointly models perception, language grounding, action prediction, and future observation generation. In this setting, the model must align visual dynamics with linguistic goals and embodied actions, while maintaining the generative capability to predict future frames or latent states.

A growing body of work suggests that unification across modalities and tasks is a key ingredient for scalable world models. Unified architectures have been shown to improve transfer, robustness, and sample efficiency in language and vision domains~\cite{radford2021clip,jaegle2021perceiver,reed2022generalist}. In robotics, recent evidence indicates that training a single model across diverse tasks and embodiments can lead to emergent generalization capabilities~\cite{brohan2023rt2,intelligence2025pi05visionlanguageactionmodelopenworld}. More recently, unified multimodal models such as BAGEL~\cite{bagel2024} adopt a two‑tower architecture composed of dedicated generation (GEN) and understanding (UND) experts, enabling both high‑quality multimodal generation and strong discriminative reasoning within a unified framework. By combining modality‑specific encoders with specialized generative and understanding experts, BAGEL establishes new state‑of‑the‑arts across diverse multimodal benchmarks. Nevertheless, it remains unclear how unified multimodal models should be adapted or evaluated as general VLAW models.

In this work, we introduce~\modelname, a unified VLAW framework that extends BAGEL’s two‑tower GEN/UND architecture to jointly support multimodal understanding, action prediction, and future frame generation within a unified model. To situate our approach within the broad landscape of VLAW methods, we provide a structured methodological comparison in~\Cref{tab-vlaw-comparison}. The \textbf{technical contributions} of~\modelname include:
\begin{enumerate}
    \item \textbf{Fourier feature action tokenizer and decoder.}
    We encode continuous control signals using Fourier feature embeddings aligned with the shared multimodal token space, improving action representation quality.

    \item \textbf{Interleaved VLAW modeling via sequence plans.} We introduce a multi-step sequence-plan framework that organizes visual observations, language instructions, action tokens, and future state predictions within a unified autoregressive sequence. During training, sequence plans are sampled from a set of defined orderings to jointly conduct multimodal learning.

    \item \textbf{LLM-Inspired VLAW data sampling.}
    We adopt an adaptive dataset reweighting strategy originated from multimodal pre-training to stabilize heterogeneous multimodal training and improve cross-task robustness.
\end{enumerate}

Through extensive experiments on robotic manipulation benchmarks, including LIBERO~\cite{li2023libero}, Language Table~\cite{lynch2019learning}, and Franka~\cite{isaacsim2024}, we systematically evaluate unified VLAW models. Our study reveals three \textbf{empirical findings}:
\begin{enumerate}
    \item \textbf{Multi-task performance.}
    Unified VLAW models achieve strong multi-task manipulation performance, outperforming various strong VLA baselines.

    \item \textbf{Action representation quality.}
    Fourier action modules improve action prediction accuracy and produce more informative action embeddings.

    \item \textbf{Stability under distribution shifts.}
    Under distribution shifts in action dynamics, unified models maintain more stable world predictions and exhibit richer representation structure as evidenced by eigenvalue spectrum analysis.
\end{enumerate}
\begin{figure}[t]
    \centering
    \includegraphics[width=\linewidth]{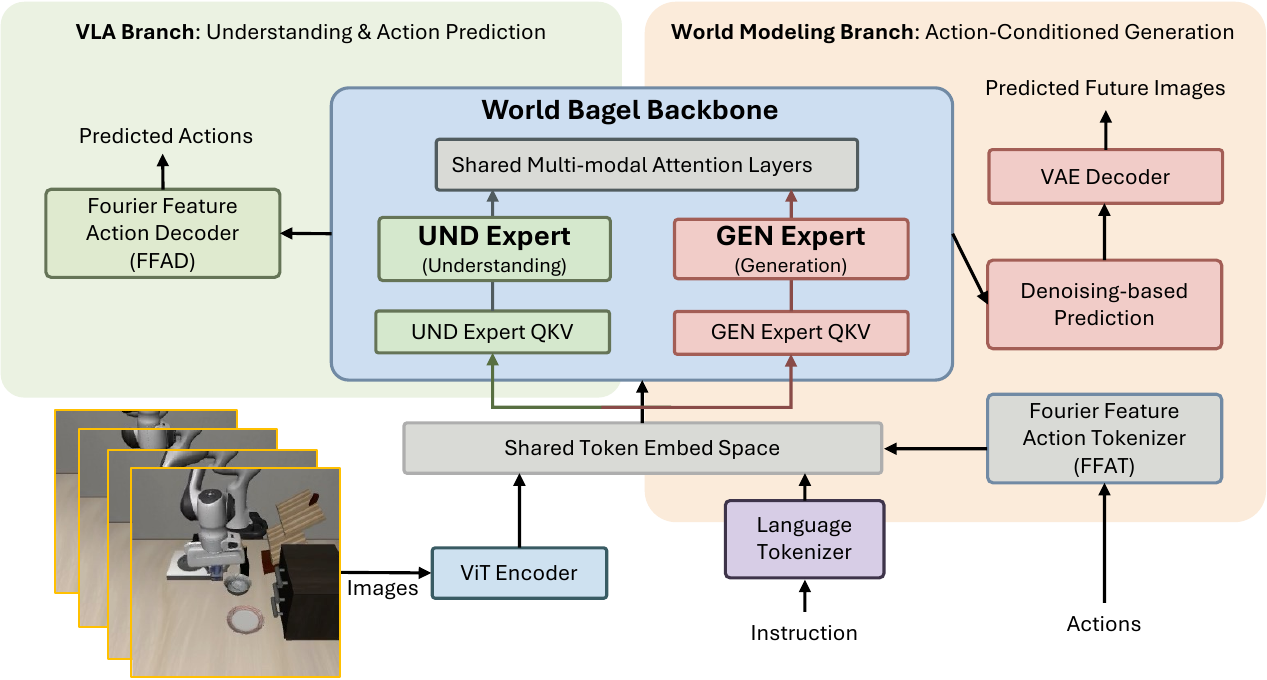}
    \caption{
    Overview of \modelname. The model is built on a shared multimodal backbone with two experts: an Understanding (UND) expert and a Generation (GEN) expert, operating over a unified token embedding space. 
    Visual observations and language instructions are encoded by a ViT encoder and a language tokenizer, while continuous control actions are represented using a Fourier Feature Action Tokenizer (FFAT). \modelname consists of two branches: the VLA branch and the World Modeling branch.
    The VLA branch leverages the UND expert to perform vision–language grounding and predict robot actions through a Fourier Feature Action Decoder (FFAD). 
    In parallel, the World Modeling branch conditions on past visual tokens and action tokens produced by FFAT, and uses the GEN expert to generate future observations via denoising-based prediction followed by a VAE decoder. 
    Both branches share multimodal attention layers but employ expert-specific QKV projections and task-specific decoders.
    }
    \label{fig:model}
\end{figure}
\section{Related Works}
\label{sec-related-works}
\begin{table*}[t]
\begin{center}
\resizebox{\textwidth}{!}{
\begin{tabular}{
x{75pt}|
x{60pt}
x{75pt}
x{60pt}
x{50pt}
x{50pt}
x{70pt}
x{60pt}}
\toprule 
Method &
Publication Venue &
Base Model &
Training Method &
Language Grounding &
Action-Conditioned &
World Modeling Predictions&
Action Tokenizer \\
\midrule

Dreamer~\cite{hafner2020dreamer} 
& ICLR'20 
& RSSM 
& From scratch 
& \budui 
& \dui 
& Future states 
& \budui \\

DINO-WM~\cite{dinowm}
& ICML'25 
& DINOv2 
& CEM
& \budui 
& \dui 
& Future states 
& \budui \\

RT-2~\cite{brohan2023rt2} 
& CoRL'23 
& PaLI-X / PaLM-E
& SFT
& \dui
& \budui 
& N/A 
& \budui \\

OpenVLA~\cite{kim2024openvla} 
& CoRL'24
& Llama 2
& SFT
& \dui 
& \budui
& N/A
& \budui \\ 

$\pi_{0.5}$~\cite{intelligence2025pi05visionlanguageactionmodelopenworld}
& arXiv'25 
& PaliGemma 
& Two-phase
& \dui 
& \budui 
& N/A
& FAST \\ 
\midrule

RynnVLA-002~\cite{cen2025rynnvla002}
& arXiv'25 
& Chameleon 
& SFT
& \dui 
& \dui 
& Future frames 
& Bin discretize \\

BagelVLA$^\dagger$~\cite{hu2026bagelvla} 
& arXiv'26 
& BAGEL
& SFT 
& \dui 
& \budui 
& Future frames  
& \budui \\

\rowcolor{cyan!8} \modelname 
& N/A
& BAGEL 
& SFT 
& \dui 
& \dui
& Future frames 
& Fourier \\
\bottomrule
\end{tabular}}
\end{center}

\caption{\textbf{Methodology comparison of recent most related works.} 
We compare \modelname against representative world models and VLA systems across key dimensions of VLAW modeling.
Methods marked with $^\dagger$ denote concurrent unpublished work developed independently and in parallel with this submission. See~\Cref{sec-related-works} for details.}
\label{tab-vlaw-comparison}
\end{table*}

\paragraph{World Models.}
World models aim to learn predictive representations of environment dynamics for planning, control, and imagination~\cite{ha2018world,schmidhuber2015world,hafner2019learning,hafner2020dreamer,zhao2022tracking,xue2026acwmphysinvestigatinggeneralizedphysical,proto,zhao2021augmenting,zhao2025physicsinformedneuraloperatorpredictivecontrol}. Beyond classical latent dynamics models, recent work has expanded toward large-scale video and generative world modeling. Video diffusion and auto-regressive approaches~\cite{xie2024show,songhistory,wan2025wan,CETCam,liu2026byteloomweavinggeometryconsistenthumanobject,song2026MVS2V} explore scalable video generation and temporal consistency. Meanwhile, world-centric simulators and embodied benchmarks—
including \\WorldGym~\cite{worldgym}, 
DREAM4~\cite{dream4}, DreamDojo~\cite{gao2026dreamdojo}, Ctrl-world~\cite{ctrlworld}, and Semantic World~\cite{berg2025semantic}—emphasize predictive modeling for interactive and physically grounded environments. Other efforts such as DINO-WM~\cite{dinowm}, V-JEPA~\cite{vjepa2}, and Hunyuan World~\cite{team2025hunyuanworld} further investigate representation-driven or generative world dynamics. Despite this diversity, most existing world models either focus on visual prediction alone or treat action and language as auxiliary signals rather than jointly optimized generative components. Concurrent works Rynnvla-002~\cite{cen2025rynnvla002} and BagelVLA~\cite{hu2026bagelvla} considers similar tasks, while they have not systematically study many VLAW design choices, such as action tokenizer or sampling strategy.

\paragraph{Vision–Language–Action (VLA) Models.}
VLA models extend multimodal learning to embodied control by conditioning policies on language instructions~\cite{brohan2022rt1,brohan2023rt2,intelligence2025pi05visionlanguageactionmodelopenworld}. Recent systems such as OpenVLA~\cite{kim2024openvla}, UniVLA~\cite{bu2025univla}, InternVLA~\cite{chen2025internvla}, Octo~\cite{team2024octo}, Cosmos Policy~\cite{cosmosPolicy}, and $\pi_{0.5}$~\cite{intelligence2025pi05visionlanguageactionmodelopenworld} demonstrate scaling trends in unified perception-action modeling. Parallel lines of work investigate reinforcement-driven alignment of multimodal models, including RL4VLM~\cite{RL4VLM} and Unified-GRPO~\cite{unifiedGRPO}. These efforts highlight the importance of reasoning, alignment, and interactive learning in multimodal agents. However, most established VLA systems emphasize policy optimization or decision-making, while explicit generative modeling of future world states remains secondary or loosely integrated.

\paragraph{Unified Generation Models.} Unified generation models~\cite{vilaU,xie2025muse,janus,vilaU,lapflow} integrate the capability to generate content in multiple modalities (e.g., text, images and videos) into a single architecture, enabling visual chain-of-thought~\cite{zebracot} and interactive world modeling \cite{emu3_5, bagel2024}. Early works~\cite{chameleon,emu} quantize images into discrete tokens to unify generation via next-token prediction, whereas Transfusion~\cite{transfusion} hybridizes diffusion for images with next-token prediction for text. Furthermore, as single visual representation often struggles to balance reconstruction for generation with semantic abstraction for understanding, recent works~\cite{janus} decouple these pathways. BAGEL~\cite{bagel2024} further decouples generation and understanding parameters through a Mixture-of-Transformer architecture, achieving performance comparable to state-of-the-art single-modality generation models. Nevertheless, how unified models can be adapted to jointly model vision, language, action, and future world dynamics remains insufficiently studied~\cite{hu2026bagelvla}. Our work positions unified VLAW modeling as this missing convergence point.

\vspace{-2mm}
\section{Methodology}
\label{sec:method}

We present \modelname, a \textbf{V}ision–\textbf{L}anguage–\textbf{A}ction–\textbf{W}orld (VLAW) model built upon BAGEL’s two‑tower GEN/UND architecture. 
Our objective is to extend a large-scale unified multimodal generative model beyond perception and reasoning, enabling it to jointly support (1) multimodal understanding, (2) structured action modeling, and (3) future world prediction within a single VLAW framework to unify perception, decision-making, and environment modeling.

\subsection{VLAW Formulation and Model Framework}

\paragraph{VLAW Formulation.}
At time step $t$, the agent observes an image $o_t$, receives a language instruction $l$, executes a continuous action $a_t \in \mathbb{R}^d$, and transitions to the next observation $o_{t+1}$. 
Given a trajectory $\tau = \{(o_1, a_1), \dots, (o_T, a_T)\}$, we model the joint distribution
\(
p(o_{t+1}, a_t \mid o_{\le t}, a_{<t}, l).
\)
Unlike classical world models~\cite{dream4,ha2018world} that learn latent dynamics separately from policy learning, the VLAW formulation jointly models action prediction and future observation generation within a unified multimodal architecture.

\paragraph{Two-Tower Backbone.}
As shown in Fig.~\ref{fig:model}, \modelname inherits BAGEL’s two‑tower design consisting of UND expert for multimodal understanding, and GEN expert for generative modeling.
All modalities (vision, language, action) are embedded into a shared token embedding space and processed by a common multimodal self-attention backbone. 
Within each transformer block, query–key–value projections are routed through either the UND or GEN expert depending on the training objective. Importantly, we do not introduce an additional action expert; instead, action modeling is achieved through fine-tuned action tokenizer and decoder, following unified token-based modeling~\cite{reed2022generalist}.

\paragraph{VLA Branch: Understanding and Action Prediction.} As shown in the left of~\Cref{fig:model}, for action prediction, the input sequence contains visual state tokens $V_t$ and language tokens $L$ (describing the goal). In the VLA Branch, visual states are encoded with a ViT encoder. These tokens are processed through the UND expert, which specializes in cross-modal grounding. An action decoder is attached on top of the UND pathway to predict the continuous control signal $a_t$ from special action token embeddings. Refer to~\Cref{sec:ffad} for more details.

\paragraph{World Modeling Branch: Action-Conditioned Generation.} In the world modeling branch, visual observations are encoded into latent representations using a VAE encoder, while predicted future states are reconstructed using the corresponding VAE decoder. The world modeling branch conditions on previous visual tokens $V_t$, language tokens $L$, and structured action tokens $A_t$. 
Actions are first transformed using our Fourier Feature Action Tokenizer (FFAT, detailed in~\Cref{sec:ffat}) and embedded into the shared token space. The resulting sequence is processed through the GEN expert, which further predicts the next-step observation $V_{t+1}$ via flow matching~\cite{bagel2024}.

\paragraph{Training Parameters and Objective.}
We initialize from a pretrained BAGEL checkpoint and perform supervised fine-tuning (SFT) on VLAW trajectories. During SFT, we update all UND and GEN expert parameters, the action prediction head, the Fourier action tokenizer projections, and the generative vision decoder, while keeping the vision and language tokenizers frozen. The overall training loss $\mathcal{L}$ combines action prediction and future frame generation:
\begin{equation}
\mathcal{L} =
\mathcal{L}_{action}
+
0.1\mathcal{L}_{vision}.
\end{equation}
Here, $\mathcal{L}_{action}$ is a regression loss over actions in UND branch, and $\mathcal{L}_{vision}$ is the flow matching loss for future visual tokens in the GEN branch~\cite{bagel2024}.

\subsection{Fourier Feature Action Decoder (FFAD)}
\label{sec:ffad}

In the VLA branch of~\modelname, the model predicts continuous control signals conditioned on visual and language tokens. Standard regression treats actions as independent real-valued outputs, while discretization-based tokenizers (e.g., uniform binning or FAST tokenization~\cite{intelligence2025pi05visionlanguageactionmodelopenworld,pertsch2025fast}) convert continuous controls into categorical tokens. Furthermore, FAST~\cite{pertsch2025fast} tokenization relies on a BPE training procedure~\cite{bpe} to learn action sub-tokens, making the tokenization data-dependent and potentially unstable across domains or control ranges.
We instead propose a \textbf{Fourier Feature Action Decoder (FFAD)}, leveraging the bi-directional mapping between actions and structured \textit{continuous} tokens.

\paragraph{Forward Tokenization (Action $\rightarrow$ Fourier Features).}
Given a continuous action vector $a_t \in \mathbb{R}^d$, we map each dimension into a Fourier feature~\cite{FourierFeaturesLearn,zhao2024grounding,nerf}:
\begin{equation}
\phi(a_t) =
\left[
\sin(2^0 \pi a_t), \cos(2^0 \pi a_t), \dots,
\sin(2^K \pi a_t), \cos(2^K \pi a_t)
\right].
\label{eq-fourier-feature}
\end{equation}
This expands each action component into frequency bands with exponentially increasing scales, capturing both coarse and fine-grained variations. 
The resulting Fourier features are linearly projected into the shared multimodal token space and treated as structured action tokens in the autoregressive sequence.
\paragraph{Prediction in Fourier Space.}
Instead of directly regressing $a_t$, the model predicts $\hat{\phi}(a_t)$ in Fourier space. 
The action loss is computed as an $\ell_2$ regression objective:
\begin{equation}
\mathcal{L}_{action} = 
\|\hat{\phi}(a_t) - \phi(a_t)\|_2^2.
\end{equation}
Prior findings~\cite{zhao2024grounding,FourierFeaturesLearn,jiang2018pointsiftsiftlikenetworkmodule} indicate predicting Fourier features improves stability by distributing information across frequency components and avoiding direct regression in the raw space, which promote learning high-frequency signals.
\paragraph{Inverse Mapping (Fourier Features $\rightarrow$ Action).} To deterministically reconstruct actions from predicted Fourier features, we aggregate phase estimates across all frequencies with phase-consistent averaging. 
First, for each frequency $k$, we normalize the predicted sine–cosine pair $(\hat{s}_k, \hat{c}_k)$ onto the unit circle:
\begin{equation}
(\tilde{s}_k, \tilde{c}_k) = 
\frac{(\hat{s}_k, \hat{c}_k)}
{\sqrt{\hat{s}_k^2 + \hat{c}_k^2 + \epsilon}},
\end{equation}
where $\epsilon = 10^{-8}$ ensures numerical stability. We then recover the phase angle $\theta_k = \operatorname{atan2}(\tilde{s}_k, \tilde{c}_k)$ and scale it back to action space via $\tilde{a}_k = \theta_k / (2^k \pi)$. 
To ensure consistency across periodic branches, we use the lowest-frequency estimate ($k=0$) as an anchor and unwrap higher-frequency phases by adding integer multiples of $2\pi$ to minimize deviation from $\tilde{a}_0$. The final action is obtained by equal-weight averaging across frequencies:
\begin{equation}
\hat{a}
= \frac{1}{K+1} \sum_{k=0}^{K} \tilde{a}_k=
\frac{1}{K+1}
\sum_{k=0}^{K}
\frac{
\operatorname{atan2}(\tilde{s}_k, \tilde{c}_k)
}{
2^k \pi
},
\end{equation}
yielding a reconstruction that preserves continuity and local metric structure of the original action space without requiring iterative optimization. Since both the Fourier embedding and phase-based inversion are continuous mappings~\cite{FourierFeaturesLearn,zhao2024grounding}, the overall reconstruction is locally Lipschitz, ensuring small perturbations in predicted features induce small changes in reconstructed actions. We provide more mathematical analysis of FFAD in the appendix~\Cref{app-math-behind-fatt-ffad}.

\paragraph{Architecture Details.}
FFAD is implemented as a lightweight adapter (33M parameters) on top of the BAGEL backbone. After visual and language tokens are processed by the UND expert, we append one or multiple special \texttt{<|action\_pred|>} tokens to the autoregressive sequence, where each appended token corresponds to one action step. The hidden state at each action anchor position $h_a^{(k)} \in \mathbb{R}^{H}$ serves as the query representation for the $k$-th action. For single-step control, a single action token is appended; for multi-step prediction with horizon $H$, $H$ consecutive action tokens are appended, enabling autoregressive multi-action generation within the same forward pass. Each hidden state is first projected to a reduced dimension $d_r=1024$ via a linear layer, then refined by a 2-layer Transformer encoder (4 attention heads, feedforward dimension $4d_r$, LayerNorm and dropout), and finally passed through a 2-block residual MLP head to produce the action output in $\mathbb{R}^{d}$. This design isolates action-specific learning within a compact head while preserving the multimodal reasoning capacity of BAGEL.

\subsection{Fourier Feature Action Tokenizer (FFAT)}
\label{sec:ffat}

In the world modeling branch, actions are used as conditioning signals for future frame generation rather than prediction targets. We therefore introduce the \textbf{Fourier Feature Action Tokenizer (FFAT)}, which converts continuous actions into structured multimodal tokens compatible with the GEN expert. Given a continuous action $a_t \in \mathbb{R}^d$, we compute the same multi-frequency Fourier representation as~\Cref{eq-fourier-feature}. During training, ground-truth actions are encoded with FFAT to condition future frame prediction. During inference, actions generated by FFAD are first reconstructed into continuous space and then re-encoded via FFAT before being fed into the world modeling branch. This shared Fourier formulation ensures consistency between action prediction and action-conditioned generation while avoiding discretization artifacts~\cite{pertsch2025fast}.

The Fourier features are directly passed through a linear projection layer:
\begin{equation}
    z_a = W_f \, \phi(a_t),
\quad W_f \in \mathbb{R}^{D \times \dim(\phi)},
\end{equation}
to map them into the shared multimodal token space of dimension $D$. The resulting action tokens are appended to the autoregressive sequence and processed by the GEN expert together with visual and language tokens. For multi-step conditioning with horizon $H$, we encode each action $a_t^{(h)}$ independently via FFAT and append $H$ action tokens to the sequence. The GEN expert then performs world modeling conditioned on the entire action sequence.

\subsection{Interleaved VLAW Modeling via Sequence Plans}
\label{sec:sequenceplan}
A key challenge in unified VLAW modeling is structuring multimodal sequences that support multi-view and multi-step observations together with multi-step control. Following BAGEL~\cite{bagel2024}, we adopt the concept of \textit{sequence plans}, which specify the ordering of different modality tokens within a single autoregressive sequence. A sequence plan defines how visual observations, language goals, actions, and predicted future states are interleaved during training. 

Let $\{V_{t-i}^{(m)}\}$ denote visual observations where $m \in \{1,\dots,M\}$ indexes camera views and $i \in \{0,\dots,K-1\}$ indexes temporal history, allowing the model to condition on multi-view and multi-step observations. Let $L$ denote a language instruction describing the task objective, such as \textit{``pick up the red block and place it into the bowl''}. Let $\{A_t^{(1)}, \dots, A_t^{(H)}\}$ denote an $H$-step action rollout predicted by the policy. A generalized sequence plan $\mathcal{S}$ is defined as
\begin{equation}
\mathcal{S} =
\big[
V_{t-K+1:t}^{1:M},
\;
L,
\;
\hat{A}_t^{1:H},
\;
\hat{V}_{t+1:t+H}^{1:M}
\big].
\label{eq-sequence-plan}
\end{equation}
where tokens marked with $\hat{\cdot}$ denote predicted values with losses applied during training, while the remaining tokens serve as conditioning inputs. In this formulation, the model predicts multi-step actions and future multi-view observations conditioned on the multi-step visual context and language goal. Importantly, sequence plans are flexible and not restricted to a single ordering. Different plans can emphasize different modeling objectives (e.g., predicting actions before future observations or vice versa), and any valid plan can be used during training. In practice, we sample sequence plans from a predefined set to jointly train perception, action prediction, and world modeling within a unified autoregressive framework (see the Appendix~\Cref{app-sequence-plan} for details).

During supervised fine-tuning (SFT), the model receives the full multimodal sequence—including multi-view observations, language instructions, action rollouts, and future frames—and jointly learns action prediction and future frame generation within a single forward pass. For multi-step control, action supervision is applied at each rollout step. At test time, the model conditions on multi-view observations and the language instruction, then generates an $H$-step action sequence autoregressively. The predicted actions are executed in an open-loop manner, while future frame generation is provided for world modeling.

\subsection{LLM-Inspired Multimodal Train-time Data Sampling}
\label{sec:databalance}

Training a unified VLAW model requires combining heterogeneous datasets that differ substantially in scale, modality composition, and supervision signals. 
Naively sampling from the aggregated dataset leads to biased optimization, where large datasets dominate gradient updates while smaller but critical supervision signals—such as action prediction—are underrepresented. 
Such imbalance is widely observed in multimodal foundation model training, where large-scale perception or instruction datasets often exceed robotics trajectory data by orders of magnitude~\cite{alayrac2022flamingo,brohan2023rt2}. We adopt the following sampling strategies together.
\paragraph{Mixture Dataset Sampling.}
Inspired by mixture-of-datasets strategies used in large multimodal models~\cite{alayrac2022flamingo,bagel2024}, we adopt smoothed dataset sampling to balance heterogeneous supervision sources. 
Given multiple training datasets $\{D_i\}$ with sizes $n_i$, each dataset is sampled with probability
\(
p_i \propto n_i^{\alpha},
\)
where $\alpha \in (0,1)$ controls smoothing of dataset imbalance. 
This strategy reduces the dominance of large datasets while preserving proportional coverage of the data distribution.

\paragraph{Priority Sequence-Plan Sampling.}
Beyond dataset-level balancing, VLAW training must also balance supervision across different modeling objectives. 
During training, data is organized hierarchically as \textit{task → demonstration → training pair}. 
We first sample a task according to the dataset mixture distribution from a random categorical distribution, then randomly sample a demonstration trajectory from that task, and finally construct a training pair by selecting a timestep within the trajectory to form the multimodal sequence used as a sequence plan. Similar as~\Cref{eq-sequence-plan}, each sequence plan defines which tokens in the multimodal sequence are treated as prediction targets during training. Furthermore, to balance supervision across policy learning and world modeling, we adopt a priority sampling strategy~\cite{schaul2015prioritized,bagel2024} over sequence plans. 
Let $\mathcal{P}=\{\pi_k\}$ denote the set of candidate sequence plans, where each plan specifies which tokens in the multimodal sequence are treated as prediction targets. 
Each plan is assigned a priority weight $w_k$ that determines its sampling frequency. 
During training, a sequence plan is sampled according to
\(
P(\pi_k)=\frac{w_k}{\sum_j w_j}.
\)
In practice, we distinguish between two plan categories: 
(1) \textit{policy-focused plans}, which supervise multi-step action rollout prediction, and 
(2) \textit{joint VLAW plans}, which supervise both action prediction and future observation prediction. 
Since learning action-conditioned dynamics is critical for embodied control, joint VLAW plans are assigned larger priority weights ($w_{\text{joint}} > w_{\text{policy}}$), making them more likely to be sampled during training. 
This prioritized scheduling ensures that the model frequently observes supervision signals for both control and environment prediction while still maintaining diversity in multimodal training sequences.

\vspace{-4mm}
\section{Experiments}
\label{sec:exp}

\begin{table*}[t]
\begin{center}
\resizebox{\textwidth}{!}{
\begin{tabular}{
x{90pt}|
x{50pt}
x{60pt}|
x{45pt}
x{45pt}
x{45pt}
x{45pt}
x{45pt}}
\toprule
Method &
Multi-task &
World Model &
Spatial &
Object &
Goal &
Long &
Average \\
\midrule
Diffusion Policy~\cite{chi2025diffusion}
& \budui
& \budui
& 78.3
& 92.5
& 68.3
& 50.5
& 72.4 \\
Octo~\cite{team2024octo}
& \dui
& \budui
& 78.9
& 85.7
& 84.6
& 51.1
& 75.1 \\
MDT~\cite{reuss2024multimodal}
& \budui
& \budui
& 78.5
& 87.5
& 73.5
& 64.8
& 76.1 \\
DiT Policy~\cite{hou2024diffusion}
& \dui
& \budui
& 84.2
& 96.3
& 85.4
& 63.8
& 82.4 \\
MaIL~\cite{jia2025mail}
& \budui
& \budui
& 74.3
& 90.1
& 81.8
& 78.6
& 83.5 \\
ThinkAct~\cite{huangthinkact}
& \dui
& \budui
& 88.3
& 91.4
& 87.1
& 70.9
& 84.4 \\
$\pi_{0.5}$~\cite{intelligence2025pi05visionlanguageactionmodelopenworld}
& \dui
& \budui
& 91.2
& 87.5
& 94.3
& 74.1
& 86.8 \\
SmolVLA~\cite{shukor2025smolvla}
& \budui
& \budui
& 93.0
& 94.0
& 91.0
& 77.0
& 88.8 \\
OpenVLA-OFT~\cite{kim2025fine}
& \dui
& \budui
& 97.6
& 98.4
& 97.9
& 94.5
& 97.1 \\
\midrule
RynnVLA-002~\cite{cen2025rynnvla002}
& \budui
& \dui
& 99.0
& 99.8
& 96.4
& 94.4
& 97.4 \\
\rowcolor{cyan!8}
\modelname
& \dui
& \dui
& \textbf{99.2}
& \textbf{99.9}
& \textbf{97.5}
& \textbf{95.3}
& \textbf{98.0} \\
\bottomrule
\end{tabular}}
\end{center}

\caption{\textbf{VLA results on the LIBERO benchmark.}
Success rate (\%) is reported across the four LIBERO task suites: Spatial, Object, Goal, and Long. 
The Multi-task column indicates whether the method is trained jointly across multiple tasks. 
The World Model column indicates whether the method models action-conditioned environment dynamics. 
All methods in the table use continuous action representations.}
\label{tab-libero-vla}
\end{table*}

\subsection{Experimental Setup}

\paragraph{Datasets.}
We evaluate \modelname on three widely used robotic manipulation benchmarks. 
\textbf{LIBERO}~\cite{li2023libero} is a large-scale simulated benchmark designed for long-horizon language-conditioned manipulation, consisting of four task suites (Spatial, Object, Goal, and Long) that together cover 10 complex manipulation scenarios requiring compositional reasoning and multi-step planning. 
\textbf{Language Table}~\cite{lynch2019learning} is a real-world tabletop manipulation dataset where a robot learns language-conditioned behaviors such as pushing blocks to target locations using human demonstrations and natural language instructions. 
\textbf{Franka}~\cite{isaacsim2024} refers to manipulation environments built in NVIDIA IsaacSim using the Franka Emika Panda robot, which include multi-object pick-and-place and rearrangement tasks with physics-based simulation and visual observations. These benchmarks provide complementary evaluation settings spanning simulated long-horizon manipulation, real-world language grounding, and physics-based robot control. For multi-task settings, we train across all tasks using one model checkpoint.

\paragraph{Baselines.} Our primary focus is evaluating models on the comprehensive VLAW task. For this setting, we mainly compare against the closest baseline, RynnVLA-002~\cite{cen2025rynnvla002}, which also aims to unify visual language perception, action prediction, and world modeling within a single framework. We do not include BagelVLA~\cite{hu2026bagelvla} in our comparison because it is a very recent concurrent work and its code and models have not yet been publicly released, preventing reproducible evaluation. In addition, to evaluate performance on the VLA task, we compare against several strong VLA baselines such as OpenVLA-OFT~\cite{kim2025fine} and $\pi_{0.5}$~\cite{intelligence2025pi05visionlanguageactionmodelopenworld}.

\paragraph{Hyperparameters and Other Details.} For the VLA tasks, we use the success rate as a metric. While for the world modeling task, we use metrics of FVD, PSNR, SSIM, LPIPS. We optimize using AdamW with learning rate $2 \times 10^{-5}$, weight decay $0.01$, cosine decay schedule, and global batch size $32$. Models are trained for $80K$ steps. We use $K=32$ in FFAT and FFAD. In our priority sequence-plan sampling algorithm, we set $w_{\text{joint}} = 2$ and $w_{\text{policy}} = 1$. All experiments are performed on 8$\times$H200 GPUs, while results are averaged over five seeds.

\subsection{Study I: Multi-task Performance}
\paragraph{Unified VLAW models outperform task-specific policies.}
We first evaluate the multi-task manipulation performance on the LIBERO~\cite{li2023libero} benchmark. \Cref{tab-libero-vla} compares \modelname against a diverse set of recent policies, including diffusion-based policies, vision–language–action (VLA) models, and unified multimodal systems. Among these methods, only a subset supports joint multi-task training, and even fewer incorporate explicit world modeling capabilities. Our model achieves the best overall performance, reaching an average success rate of \textbf{98.0\%}, outperforming the strongest baseline RynnVLA-002~\cite{cen2025rynnvla002} by +0.6\%. RynnVLA-002~\cite{cen2025rynnvla002} uses separate weights for the VLA policy and world model, and it achieves inferior performance than ours. These results indicate that unified modeling of perception, language, action, and environment dynamics can provide complementary supervision signals that improve control performance.

\paragraph{World modeling improves predictive environment understanding.}
To further analyze the role of world modeling, we evaluate predictive environment modeling quality across three benchmarks. \Cref{tab-worldmodel-benchmark} reports four world modeling metrics~\cite{cen2025rynnvla002} including FVD, PSNR, SSIM, and LPIPS for both action-conditioned and action-free settings, while qualitative results of three benchmarks are provided in~\Cref{fig-vlaw-results}. Across all datasets, \modelname consistently improves prediction quality compared with RynnVLA-002~\cite{cen2025rynnvla002}. For example, on LIBERO, the action-conditioned variant reduces FVD from 389.5 to 373.1 while simultaneously improving PSNR and SSIM and lowering LPIPS.
Similar improvements are observed on Language Table and Franka, suggesting that the learned dynamics model generalizes across a wide range of environments. Furthermore, comparing the two halves of \Cref{tab-worldmodel-benchmark} reveals that action-conditioned prediction consistently outperforms action-free modeling. This result highlights the importance of explicitly incorporating control signals when learning environment dynamics, since future observations depend strongly on executed actions. Overall, these results suggest that unified VLAW modeling adopted in~\modelname improves both control performance and predictive environment understanding.

\begin{table}[t]
\centering
\tablestyle{2.8pt}{1.12}
\begin{tabular}{lcccc|cccc}
\toprule
& \multicolumn{4}{c}{Action-conditioned} & \multicolumn{4}{c}{Action-free} \\
\cmidrule(lr){2-5} \cmidrule(lr){6-9}
Method 
& FVD$\downarrow$ & PSNR$\uparrow$ & SSIM$\uparrow$ & LPIPS$\downarrow$
& FVD$\downarrow$ & PSNR$\uparrow$ & SSIM$\uparrow$ & LPIPS$\downarrow$ \\
\midrule

\multicolumn{9}{c}{LIBERO~\cite{li2023libero}} \\
\midrule
RynnVLA-002~\cite{cen2025rynnvla002}
& 389.5 & 21.74 & 76.92 & 21.02
& 410.3 & 21.18 & 75.84 & 22.61 \\

\rowcolor{cyan!8}
\modelname
& \textbf{373.1} & \textbf{23.88} & \textbf{82.41} & \textbf{16.33}
& 405.4 & 22.32 & 79.15 & 20.28 \\

\midrule
\multicolumn{9}{c}{Language Table~\cite{lynch2019learning}} \\
\midrule
RynnVLA-002~\cite{cen2025rynnvla002}
& 418.7 & 20.94 & 73.65 & 23.51
& 441.6 & 20.37 & 72.41 & 24.63 \\

\rowcolor{cyan!8}
\modelname
& \textbf{392.2} & \textbf{22.61} & \textbf{77.08} & \textbf{19.74}
& 412.8 & 21.42 & 75.13 & 21.88 \\

\midrule
\multicolumn{9}{c}{Franka~\cite{isaacsim2024}} \\
\midrule
RynnVLA-002~\cite{cen2025rynnvla002}
& 462.4 & 21.36 & 74.21 & 22.80
& 489.3 & 20.81 & 72.93 & 24.01 \\

\rowcolor{cyan!8}
\modelname
& \textbf{421.9} & \textbf{23.04} & \textbf{79.56} & \textbf{18.97}
& 447.6 & 22.02 & 76.88 & 21.35 \\

\bottomrule
\end{tabular}

\vspace{2mm}
\caption{
\textbf{World modeling comparisons.}
We compare action-conditioned and action-free world modeling.
Numbers are averaged across tasks within each benchmark. Our model consistently improves prediction fidelity, especially when action-conditioned.
}
\label{tab-worldmodel-benchmark}
\end{table}
\begin{figure}[t]
    \centering
    \includegraphics[width=\linewidth]{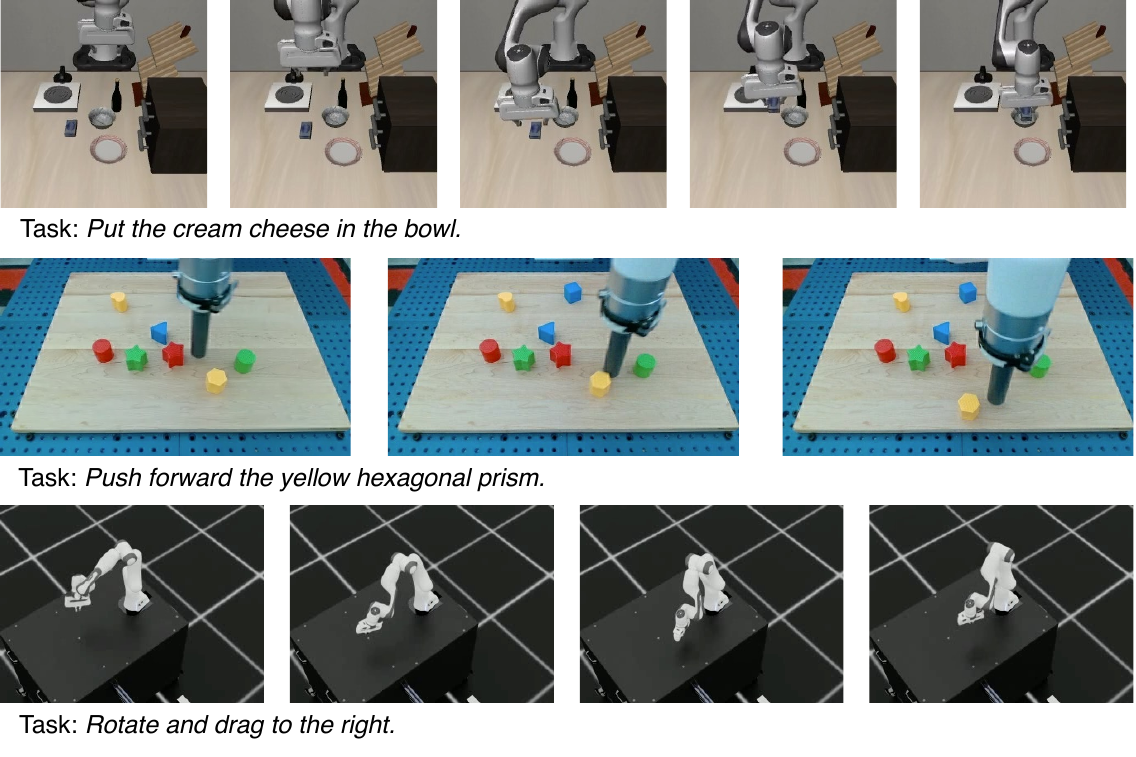}
    \caption{
    World modeling results on three environments: LIBERO~\cite{li2023libero}, Language Table~\cite{lynch2019learning} and Franka~\cite{isaacsim2024}. Please refer to the supplementary to see more video demos.
    }
    \label{fig-vlaw-results}
\vspace{-3mm}
\end{figure}



\subsection{Study II: Action Representation Quality}

\paragraph{Action decoder design.}
\Cref{subtab-action-decoder} compares different action decoding strategies. 
For \textbf{regression}, the policy directly predicts continuous actions using a linear projection head trained with an $\ell_2$ loss. 
For \textbf{bin discretization}, each action dimension is uniformly quantized into 256 bins, and the model predicts the bin index using a categorical cross-entropy loss, with the final action obtained by mapping the predicted bin center back to the continuous space. 
For \textbf{FAST}~\cite{pertsch2025fast}, actions are tokenized using the FAST tokenizer with a codebook size of 1024 and sequence length of 4 tokens per action step. Our FFAD structure is detailed in~\Cref{sec:ffad}, which reduces the action MSE (A-MSE) to \textbf{0.028} and improves the success rate to \textbf{98.0\%}, outperforming other action prediction methods.

\paragraph{Number of Fourier Bands in FFAD / FFAT.}
\Cref{subtab:fourier} studies the effect of the number of Fourier bands $K$ used in FFAD. 
Increasing $K$ improves both action prediction and success rate up to $K=32$, while further increasing to $K=64$ yields no additional gains. 
This indicates that $K=32$ provides a good trade-off between representation capacity and stability. We further examine the number of Fourier bands used in the Fourier Feature Action Tokenizer (FFAT). As shown in \Cref{subtab-ffat}, PSNR increases from 23.21 to \textbf{23.88} when $K$ increases from 8 to 32, with marginal changes beyond that point.

\paragraph{Representation Structure Analysis.}
We analyze the structure of the learned action representations across different decoding strategies, including regression, bin discretization, FAST~\cite{pertsch2025fast}, and our proposed FFAD. 
For each method, we extract the action embeddings from the trained policy by taking the decoder input representation corresponding to each action step. 
Using these frozen embeddings, we train a linear classifier (linear probe) to predict the task identity among the LIBERO task suites using only the action representation as input. 
The classifier is a single linear layer trained with cross-entropy loss, while the backbone model remains fixed. Higher accuracy indicates that the representation encodes more structured task-relevant information. 
As shown in \Cref{subtab-structure}, the linear probe accuracy gradually improves from 49.8\% (regression) to 52.1\% (bin) and 54.2\% (FAST), while FFAD achieves a significantly higher accuracy of \textbf{68.7\%}. 
This result suggests that our proposed FFAD organizes the action embedding space more effectively, producing features that better capture task-level structure.

\begin{table}[t!]
\caption{\textbf{Ablation studies.}
We analyze several design choices of our~\modelname model on the LIBERO validation set. The default model is highlighted in \colorbox{cyan!8}{Cyan}.}
\label{tab-ablation}
\vspace{-5mm}
\begin{minipage}{0.48\textwidth}
\centering
\subcaption{\textbf{Action Prediction Methods}}
\label{subtab-action-decoder}
\vspace{-3mm}
\tablestyle{0pt}{1.08}
\begin{tabular}{x{39pt}| x{38pt} x{30pt} x{30pt} x{30pt}}
\toprule
Method & Regression & Bin & FAST & \cellcolor{cyan!8} FFAD \\
\midrule
A-MSE $\downarrow$ & 0.042 & 0.037 & 0.035 & \cellcolor{cyan!8}\textbf{0.028} \\
Success $\uparrow$ & 96.3 & 95.6 & 96.9 & \cellcolor{cyan!8}\textbf{98.0} \\
\bottomrule
\end{tabular}
\end{minipage}
\hspace{1.5mm}
\begin{minipage}{0.49\textwidth}
\centering
\subcaption{\textbf{Number of Fourier Bands in FFAD}}
\label{subtab:fourier}
\vspace{-3mm}
\tablestyle{0pt}{1.08}
\begin{tabular}{x{39pt}| x{38pt} x{30pt} x{30pt} x{30pt}}
\toprule
$K$ & 8 & 16 & \cellcolor{cyan!8} 32 & 64 \\
\midrule
A-MSE $\downarrow$ & 0.034 & 0.031 & \cellcolor{cyan!8}\textbf{0.028} & 0.029 \\
Success $\uparrow$ & 96.8 & 97.3 & \cellcolor{cyan!8}\textbf{98.0} & 97.6 \\
\bottomrule
\end{tabular}
\end{minipage}
\begin{minipage}{0.48\textwidth}
\centering
\subcaption{\textbf{Number of Fourier Bands in FFAT}}
\label{subtab-ffat}
\vspace{-3mm}
\tablestyle{0pt}{1.08}
\begin{tabular}{x{45pt}| x{30pt} x{30pt} x{30pt} x{30pt}}
\toprule
$K$ & 8 & 16 & \cellcolor{cyan!8} 32 & 64 \\
\midrule
PSNR $\uparrow$ & 23.21 & 23.82 & \cellcolor{cyan!8}\textbf{23.88} & 23.87 \\
\bottomrule
\end{tabular}
\end{minipage}
\hspace{1.5mm}
\begin{minipage}{0.48\textwidth}
\centering
\subcaption{\textbf{Representation Linear Probe}}
\label{subtab-structure}
\vspace{-3mm}
\tablestyle{0pt}{1.08}
\begin{tabular}{x{34pt}| x{40pt} x{28pt} x{28pt} x{32pt}}
\toprule
Metric & Regression & Bin & FAST & \cellcolor{cyan!8} FFAD \\
\midrule
Acc. $\uparrow$ & 49.8 & 52.1 & 54.2 & \cellcolor{cyan!8}\textbf{68.7} \\
\bottomrule
\end{tabular}
\end{minipage}
\begin{minipage}{0.48\textwidth}
\centering
\subcaption{\textbf{Stability Test - Action Noise}}
\label{subtab-noise}
\vspace{-3mm}
\tablestyle{0pt}{1.08}
\begin{tabular}{x{38pt}| x{70pt} x{65pt}}
\toprule
Metric & RynnVLA-002~\cite{cen2025rynnvla002} & \cellcolor{cyan!8} \modelname \\
\midrule
PSNR $\uparrow$ & 21.38 & \cellcolor{cyan!8}\textbf{23.64} \\
LPIPS $\downarrow$ & 0.274 & \cellcolor{cyan!8}\textbf{0.203} \\
\bottomrule
\end{tabular}
\end{minipage}
\hspace{1.5mm}
\begin{minipage}{0.48\textwidth}
\centering
\subcaption{\textbf{Stability Test - Action Scaling}}
\label{subtab-scale}
\vspace{-3mm}
\tablestyle{0pt}{1.08}
\begin{tabular}{x{38pt}| x{70pt} x{60pt}}
\toprule
Metric & RynnVLA-002~\cite{cen2025rynnvla002} & \cellcolor{cyan!8} \modelname \\
\midrule
PSNR $\uparrow$ & 21.91 & \cellcolor{cyan!8}\textbf{23.57} \\
LPIPS $\downarrow$ & 0.261 & \cellcolor{cyan!8}\textbf{0.208} \\
\bottomrule
\end{tabular}
\end{minipage}
\begin{minipage}{0.48\textwidth}
\centering
\subcaption{\textbf{Stability Test - Temporal Perturbation}}
\label{subtab-temporal}
\vspace{-3mm}
\tablestyle{0pt}{1.08}
\begin{tabular}{x{38pt}| x{70pt} x{60pt}}
\toprule
Metric & RynnVLA-002~\cite{cen2025rynnvla002} & \cellcolor{cyan!8} \modelname\\
\midrule
PSNR $\uparrow$ & 20.94 & \cellcolor{cyan!8}\textbf{23.41} \\
LPIPS $\downarrow$ & 0.287 & \cellcolor{cyan!8}\textbf{0.214} \\
\bottomrule
\end{tabular}
\end{minipage}
\hspace{1.5mm}
\begin{minipage}{0.48\textwidth}
\centering
\subcaption{\textbf{Eigenvalue Spectrum}}
\label{subtab-spectrum}
\vspace{-3mm}
\tablestyle{0pt}{1.08}
\begin{tabular}{x{70pt}| x{70pt} x{30pt}}
\toprule
Metric & RynnVLA-002~\cite{cen2025rynnvla002} & \cellcolor{cyan!8} Ours \\
\midrule
Effective Rank $\uparrow$ & 16.3 & \cellcolor{cyan!8}\textbf{33.7} \\
$\lambda_1 / \sum_i \lambda_i \;\downarrow$ & 0.67 & \cellcolor{cyan!8}\textbf{0.43} \\
\bottomrule
\end{tabular}
\end{minipage}
\vspace{-4mm}
\end{table}
\vspace{-3mm}
\subsection{Study III: Stability Under Distribution Shifts}

Robotic manipulation systems frequently encounter distribution shifts in action dynamics caused by execution noise, trajectory scaling, or temporal inconsistencies. A robust world model should therefore capture the underlying action-conditioned transition dynamics rather than simply reproducing previously observed trajectories. We evaluate stability on the LIBERO benchmark by introducing three controlled perturbations to the action sequence \textit{during evaluation}: Gaussian action noise with variance $\sigma=0.05$, multiplicative action scaling sampled from $[0.8,1.2]$, and temporal perturbations that shift actions by $\pm1$ timestep. As shown in \Cref{subtab-noise,subtab-scale,subtab-temporal}, RynnVLA-002 experiences substantial degradation in prediction quality under action perturbations. 
For example, under action noise the PSNR drops to 21.38 with LPIPS increasing to 0.274, while our model maintains significantly higher prediction fidelity with a PSNR of \textbf{23.64} and LPIPS of \textbf{0.203}. 
Similar trends are observed for action scaling and temporal perturbations, where our method consistently achieves higher PSNR and lower LPIPS than RynnVLA-002. 
These results indicate that RynnVLA-002 primarily relies on visual history and tends to reproduce previously observed manipulation trajectories, whereas our action-conditioned model better captures the underlying action dynamics. To further analyze representation stability, we compute the eigenvalue spectrum of the action embedding covariance matrix. 
As shown in \Cref{subtab-spectrum}, our model achieves a significantly higher effective rank (33.7 vs.\ 16.3) and a lower dominant eigenvalue ratio ($\lambda_1/\sum_i\lambda_i$ of 0.43 vs.\ 0.67) compared with RynnVLA-002. 
This indicates that our representation distributes variance across more directions in the embedding space, resulting in a richer and more stable representation of action dynamics under distribution shifts.
\vspace{-5mm}
\section{Conclusion}
\vspace{-3mm}
We introduced \modelname, a unified Vision–Language–Action–World (VLAW) model built upon BAGEL’s two-tower multimodal architecture. By jointly modeling perception, language grounding, action prediction, and future observation generation within a single autoregressive framework, our approach enables unified multimodal world modeling for embodied agents~\cite{yuan2026clarify, zhao2026rma}. Experiments across robotic manipulation benchmarks show that unified VLAW modeling improves multi-task control performance, produces more structured and transferable action representations through Fourier-based tokenization, and yields more stable representations under distribution shifts. These results suggest that architectural unification can improve both policy learning and predictive environment modeling, providing a promising foundation for scalable multimodal world models.

\clearpage  


%
%
\bibliographystyle{splncs04}
\bibliography{main}

\clearpage
\newpage
\appendix
\section*{Appendix}
\addcontentsline{toc}{section}{Appendix}
\section{FFAT and FFAD Mathematical Properties}
\label{app-math-behind-fatt-ffad}

In this section we provide theoretical analysis for the Fourier Feature Action Tokenizer (FFAT, introduced in~\Cref{sec:ffat}) and the Fourier Feature Action Decoder (FFAD, introduced in~\Cref{sec:ffad}). 
We analyze three key properties related to our proposed models:

\begin{enumerate}
\item stability of the Fourier action representation,
\item information preservation of the Fourier embedding,
\item consistency of the Fourier-based reconstruction used in FFAD.
\end{enumerate}

Throughout this section we assume actions are bounded in a compact interval
$
a \in \mathcal{A} \subset [-1,1]^d
$
which is standard for robot control signals.

\subsection{Stability of the Fourier Action Representation}

We first show that the Fourier embedding used by FFAT is Lipschitz continuous, ensuring robustness to small perturbations of actions.

\paragraph{Definition.}
Let the Fourier feature mapping used in FFAT be (introduced in~\Cref{eq-fourier-feature} of the main paper):

\begin{equation}
\phi(a)=
\left[
\sin(2^0\pi a),\cos(2^0\pi a),
\dots,
\sin(2^K\pi a),\cos(2^K\pi a)
\right].
\end{equation}

\begin{theorem}[Lipschitz Stability of Fourier Action Embedding]
\label{thm:lipschitz}
For any $a_1,a_2\in\mathcal{A}$, the Fourier embedding $\phi(\cdot)$ satisfies

\begin{equation}
\|\phi(a_1)-\phi(a_2)\|_2
\le
L\|a_1-a_2\|_2,
\end{equation}
where
\begin{equation}
L=\pi\sqrt{2\sum_{k=0}^{K}2^{2k}} .
\end{equation}

\end{theorem}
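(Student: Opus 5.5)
The plan is to reduce to one coordinate and one frequency, then assemble. The map $\phi$ acts coordinatewise: for $a=(a^{(1)},\dots,a^{(d)})$, each coordinate $a^{(j)}$ produces the $2(K+1)$ entries $\sin(2^k\pi a^{(j)})$ and $\cos(2^k\pi a^{(j)})$, $k=0,\dots,K$. We therefore write
\[
\|\phi(a_1)-\phi(a_2)\|_2^2=\sum_{j=1}^d\sum_{k=0}^K\Bigl[\bigl(\sin(2^k\pi a_1^{(j)})-\sin(2^k\pi a_2^{(j)})\bigr)^2+\bigl(\cos(2^k\pi a_1^{(j)})-\cos(2^k\pi a_2^{(j)})\bigr)^2\Bigr].
\]
Boundedness of $\mathcal{A}$ will not be needed; the bound holds on all of $\mathbb{R}^d$.

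The key scalar step is to bound each bracket. By the mean value theorem, $|\sin x-\sin y|\le|x-y|$ and $|\cos x-\cos y|\le|x-y|$, since both derivatives are bounded by $1$. Applying this with $x=2^k\pi a_1^{(j)}$ and $y=2^k\pi a_2^{(j)}$ gives the following bound on the bracket:
\[
\text{bracket}\le 2\cdot 2^{2k}\pi^2\,\bigl(a_1^{(j)}-a_2^{(j)}\bigr)^2 .
\]
This crude bound is exactly what yields the stated constant, with its factor $2$. A sharper estimate is available: the pair $(\sin,\cos)$ moves along the unit circle, so its chordal distance is $2|\sin((x-y)/2)|\le|x-y|$, which removes the factor $2$. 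The theorem's $L$ is therefore not tight, and I would mention this in a remark.

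Summing over $k$ and then over $j$ gives
\[
\|\phi(a_1)-\phi(a_2)\|_2^2\le 2\pi^2\Bigl(\sum_{k=0}^K 2^{2k}\Bigr)\sum_{j=1}^d\bigl(a_1^{(j)}-a_2^{(j)}\bigr)^2=L^2\|a_1-a_2\|_2^2 .
\]
Taking square roots finishes the proof.

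I do not expect a real obstacle. The only step needing care is the bookkeeping of the multi-dimensional case. The notation $\sin(2^k\pi a)$ in the paper's definition is applied elementwise, and one must check that the $d$ coordinates contribute additively, so that no extra factor of $\sqrt d$ appears. This holds because the squared norm separates across coordinates.
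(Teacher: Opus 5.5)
Your proof is correct and follows essentially the same route as the paper. Both arguments use a derivative (mean-value/Jacobian) bound for each frequency band, square and sum over bands (you also sum explicitly over coordinates), and then take a square root. The only difference is where the factor $2$ enters. The paper bounds the pair $(\sin,\cos)$ jointly via the Jacobian of $f_k$, which already gives $2^k\pi|a_1-a_2|$ without a factor $2$, and then adds the $2$ as slack when stacking the bands. Your separate bounds on $\sin$ and $\cos$ produce the $2$ naturally, and your chordal-distance remark recovers exactly the paper's sharper per-band estimate.
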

Theorem~\ref{thm:lipschitz} shows that the Fourier representation is stable: small perturbations in actions produce bounded changes in token representations.
\begin{proof}

For each frequency $k$, consider the mapping:

\begin{equation}
f_k(a)=(\sin(2^k\pi a),\cos(2^k\pi a)).
\end{equation}

The Jacobian norm of $f_k$ satisfies:

\begin{equation}
\|J_{f_k}(a)\|_2 \le 2^k\pi.
\end{equation}

Therefore,

\[
\|f_k(a_1)-f_k(a_2)\|_2
\le
2^k\pi |a_1-a_2|.
\]

Stacking all frequency components,

\begin{equation}
\|\phi(a_1)-\phi(a_2)\|_2^2
\le
\sum_{k=0}^{K}
2(2^k\pi)^2
\|a_1-a_2\|_2^2 .
\end{equation}

Taking square roots completes the proof.
\end{proof}

\subsection{Information Preservation of Fourier Features}

Next we show that the Fourier embedding retains sufficient information about the action.

\begin{theorem}[Injectivity on Bounded Domains]
\label{thm:injective}

Assume $a \in (-1,1)$ and $K\ge1$. 
Then the Fourier embedding $\phi(a)$ is injective almost everywhere on $\mathcal{A}$.

\end{theorem}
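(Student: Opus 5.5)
The plan is to prove a stronger statement than the one claimed: on the open interval $(-1,1)$ the embedding $\phi$ is injective everywhere. Only the lowest-frequency block of $\phi$ will be needed. The phrase ``almost everywhere'' will then be explained by the boundary of the closed domain $\mathcal{A}\subset[-1,1]^d$, which has measure zero.

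The first step is the scalar case. I would take $a_1,a_2\in(-1,1)$ with $\phi(a_1)=\phi(a_2)$. Comparing the $k=0$ components gives $\sin(\pi a_1)=\sin(\pi a_2)$ and $\cos(\pi a_1)=\cos(\pi a_2)$. So the unit vectors $e^{i\pi a_1}$ and $e^{i\pi a_2}$ coincide, which means $\pi a_1-\pi a_2\in 2\pi\mathbb{Z}$, i.e.\ $a_1-a_2\in 2\mathbb{Z}$. Since $a_1,a_2\in(-1,1)$, we have $|a_1-a_2|<2$, and the only admissible integer multiple of $2$ is $0$. Hence $a_1=a_2$. Equivalently, $\theta\mapsto(\sin\theta,\cos\theta)$ is a bijection from the half-open interval $(-\pi,\pi]$ onto the unit circle, and $\pi a$ ranges over a subset of it.

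The higher-frequency blocks ($k\ge1$) are not needed for injectivity, so the hypothesis $K\ge1$ enters only in a harmless way. I would remark that these blocks carry redundant, finer-resolution copies of the phase, which is exactly what the unwrapping step of FFAD exploits. The second step is the vector case $a\in\mathcal{A}\subset[-1,1]^d$. Here $\phi$ acts coordinatewise, so equality $\phi(a_1)=\phi(a_2)$ forces equality of the $k=0$ blocks in every coordinate, and the scalar argument applies to each coordinate separately.

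The only delicate point, and the reason for ``almost everywhere'', is the boundary. The values $a=-1$ and $a=1$ give the same $k=0$ block, since $e^{-i\pi}=e^{i\pi}$. They in fact give identical features at every frequency, because $2^k\pi\cdot 2\in2\pi\mathbb{Z}$. Injectivity therefore genuinely fails on the set of points of $[-1,1]^d$ having some coordinate equal to $\pm1$. This set is a finite union of hyperplane pieces and has Lebesgue measure zero. Consequently $\phi$ is injective on $\mathcal{A}\cap(-1,1)^d$, which is a full-measure subset of $\mathcal{A}$, and this gives the almost-everywhere statement.
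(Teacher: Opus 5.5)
Your proof is correct. It uses the same basic mechanism as the paper's: equal features force a phase congruence modulo $2\pi$, and the width of $(-1,1)$ then forces the offset to vanish. You carry it out differently and more cleanly.

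The paper imposes equality at every band and writes the resulting condition as $a_1-a_2=m/2^k$. It then combines $k=0$ and $k=1$ to conclude. The correct consequence of $2^k\pi a_1\equiv 2^k\pi a_2 \pmod{2\pi}$ is $a_1-a_2\in 2^{1-k}\mathbb{Z}$. With the paper's weakened form, the two bands together only give $a_1-a_2\in\mathbb{Z}$, which on $(-1,1)$ does not exclude $a_1-a_2=\pm1$. So the paper's ``only solution'' step does not follow from what is written. Your use of the $k=0$ band alone, with the sharp congruence $a_1-a_2\in 2\mathbb{Z}$, closes the argument in one step. It also shows that the hypothesis $K\ge1$ is unnecessary, since extra bands only add constraints.

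Your version also makes explicit two things the paper leaves implicit:
\begin{itemize}
\item the coordinatewise vector case $\mathcal{A}\subset[-1,1]^d$;
\item exactly where injectivity fails: $a=\pm1$ give identical features at every frequency, so no number of bands can separate them.
\end{itemize}
The ``almost everywhere'' qualifier therefore refers to the measure-zero set of points with some coordinate equal to $\pm1$. On the open interval assumed in the statement, the map is injective everywhere.
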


\begin{proof}

Suppose

\[
\phi(a_1)=\phi(a_2).
\]

Then for every frequency $k$

\[
\sin(2^k\pi a_1)=\sin(2^k\pi a_2),
\quad
\cos(2^k\pi a_1)=\cos(2^k\pi a_2).
\]

This implies

\[
2^k\pi a_1 \equiv 2^k\pi a_2 \pmod{2\pi}.
\]

Thus

\[
a_1-a_2=\frac{m}{2^k}
\]

for some integer $m$.  
For $k=0$ and $k=1$ simultaneously this yields

\[
a_1-a_2=m_0 = \frac{m_1}{2}.
\]

The only solution in $(-1,1)$ is $a_1=a_2$.  
Therefore the mapping is injective except at measure-zero boundary points.
\end{proof}

This result ensures that Fourier tokens preserve the identity of actions within the bounded action domain.

\subsection{Consistency of Fourier Reconstruction}

We now analyze the phase-based reconstruction used by FFAD.

\begin{theorem}[Consistency of Phase Reconstruction]
\label{thm:reconstruction}

Let $\hat{\phi}(a)$ be the predicted Fourier features satisfying

\begin{equation}
\|\hat{\phi}(a)-\phi(a)\|_2 \le \epsilon .
\end{equation}

Then the reconstructed action $\hat{a}$ produced by the FFAD decoder satisfies

\begin{equation}
|\hat{a}-a|
\le
C\epsilon
\end{equation}

for some constant $C$ depending only on $K$.

\end{theorem}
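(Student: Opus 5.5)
The proof works coordinatewise. Fix one action dimension $a\in(-1,1)$ with true features $(s_k,c_k)=(\sin 2^k\pi a,\cos 2^k\pi a)$, and let $\epsilon_k=\|(\hat s_k,\hat c_k)-(s_k,c_k)\|_2\le\epsilon$. The plan has three steps. First, bound the phase error at each frequency. Second, show that the anchored unwrapping picks the correct branch when $\epsilon$ is small. Third, average the per-frequency errors, and handle large $\epsilon$ separately using boundedness of the decoder output. The constant $C$ will come out as the maximum of the small-$\epsilon$ and large-$\epsilon$ constants, and it depends only on $K$.

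\textbf{Per-frequency phase error.} Assume $\epsilon\le 1/2$. Since $(s_k,c_k)$ lies on the unit circle, $\sqrt{\hat s_k^2+\hat c_k^2}\ge 1/2$. The normalization map $x\mapsto x/\sqrt{\|x\|^2+10^{-8}}$ is Lipschitz with constant at most $2$ on $\{\|x\|\ge 1/2\}$. Its value at a unit vector differs from that vector only by $O(10^{-8})$, which I will either absorb into the bound or ignore. Hence $\|(\tilde s_k,\tilde c_k)-(s_k,c_k)\|\le 2\epsilon$. The chord-versus-arc inequality then gives an angular error, measured modulo $2\pi$, of $d(\theta_k,2^k\pi a)\le \pi\epsilon$.

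\textbf{Anchor and unwrapping (main obstacle).} For $k=0$, $\tilde a_0=\theta_0/\pi\in(-1,1]$ satisfies $|\tilde a_0-a|\le\epsilon$, provided no wrap-around occurs at the branch cut $\pm\pi$. This is exactly where the statement is delicate. Near $a=\pm1$ the features of $1$ and $-1$ coincide, consistent with Theorem~\ref{thm:injective} holding only almost everywhere. I would therefore make the assumption explicit that $a$ ranges over a compact subset $[-1+\delta,1-\delta]$, and require $\epsilon<\delta$.

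For $k\ge1$, the true value satisfies $2^k\pi a=\theta_k+2\pi n_k$ for some integer $n_k$. The unwrapping step chooses the integer $m$ minimizing $|(\theta_k+2\pi m)/(2^k\pi)-\tilde a_0|$. The candidates are spaced $2/2^k$ apart. The correct candidate lies within $\epsilon/2^k+\epsilon$ of $\tilde a_0$, because it is within $\epsilon/2^k$ of $a$, which is within $\epsilon$ of $\tilde a_0$. So the correct branch is selected whenever $\epsilon(1+2^{-k})<2^{-k}$, which holds for all $k\le K$ once $\epsilon\le \epsilon_0:=\min(\delta,\,2^{-K-1})$. On this event, $|\tilde a_k-a|\le \epsilon/2^k$. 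This $2^{-K}$ threshold is the real source of the dependence on $K$. I expect it to be the step needing the most care: a direct hierarchical argument would unwrap each level from the previous one, but the paper's decoder anchors every level to $k=0$, and the bound above is written for that.

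\textbf{Averaging and large $\epsilon$.} For $\epsilon\le\epsilon_0$,
\[
|\hat a-a|\le\frac{1}{K+1}\sum_{k=0}^K\frac{\epsilon}{2^k}\le\frac{2\epsilon}{K+1}.
\]
For $\epsilon>\epsilon_0$, each $\tilde a_k$ lies within $2^{-k}$ of $\tilde a_0\in(-1,1]$, so $|\hat a|\le 2$ and $|\hat a-a|\le 3\le (3/\epsilon_0)\,\epsilon$. Taking $C=\max\bigl(2/(K+1),\,3/\epsilon_0\bigr)$, which equals $3\max(2^{K+1},1/\delta)$, proves the claim per coordinate. The vector version follows by applying this to each coordinate, which only changes $C$ by a factor of $\sqrt d$.
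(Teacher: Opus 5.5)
Your proof is correct, and it takes a more careful route than the paper's. The paper argues in three lines. A ``standard perturbation bound for atan2'' gives $|\hat\theta_k-\theta_k|\le C_1\epsilon$. Dividing by $2^k\pi$ gives $|\tilde a_k-a|\le C_1\epsilon/(2^k\pi)$. Averaging gives $C=\frac{C_1}{K+1}\sum_{k=0}^{K}\frac{1}{2^k\pi}$. The division step treats the phase as a real number rather than a class modulo $2\pi$. It silently assumes two things: that the $k=0$ anchor does not wrap across $\pm\pi$, and that unwrapping against $\tilde a_0$ lands on the correct branch for every $k$. The second assumption can fail once $\epsilon$ is of order $2^{-K}$, a regime the paper never treats. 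You supply exactly these missing pieces: the branch analysis and the small/large-$\epsilon$ split via boundedness of the output. On your small-noise event, your constant $\frac{1}{K+1}\sum_k 2^{-k}$ coincides with the paper's for $C_1=\pi$.

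Your margin hypothesis $a\in[-1+\delta,1-\delta]$ is not a weakening but a necessity. Take $a=1-\eta$ and perturb only $\hat s_0$ by $-2\pi\eta$. This sends $\theta_0$ near $-\pi$, so $\tilde a_0\approx-1$, and every higher band then unwraps near $-1$. Thus $|\hat a-a|\approx 2$ while $\epsilon=2\pi\eta\to0$. So no constant depending only on $K$ works uniformly on $(-1,1)$. The paper's claim holds only with $C$ depending also on how far $\mathcal A$ stays from $\pm1$.

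Two refinements. First, the normalization is irrelevant to $\theta_k$, since $\operatorname{atan2}$ is invariant under positive rescaling. The angle between $(\hat s_k,\hat c_k)$ and $(s_k,c_k)$ is at most $\arcsin\epsilon\le\pi\epsilon/2$ for every $\epsilon<1$. So neither the $10^{-8}$ discussion nor the restriction $\epsilon\le1/2$ is needed.

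Second, you overstate things in calling the $2^{-K}$ branch threshold the real source of the $K$-dependence, because a linear bound does not need the correct branch. Since $\tilde a_k$ minimizes the distance to $\tilde a_0$ over the candidates, it is at least as close to $\tilde a_0$ as the correct candidate $c_k^\ast$. That candidate lies within $2^{-k}\epsilon$ of $a$. Hence $|\tilde a_k-a|\le 2|\tilde a_0-a|+2^{-k}\epsilon\le3\epsilon$ whenever the anchor does not wrap, which holds for all $\epsilon<\delta$. Combined with your boundedness argument for $\epsilon\ge\delta$, this gives $C=3/\delta$, independent of $K$. The exact-branch analysis only buys the sharper constant $2/(K+1)$ when $\epsilon\lesssim2^{-K}$.

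A small slip: the relation $2^k\pi a=\theta_k+2\pi n_k$ should carry the phase-error term $e_k$ with $|e_k|\le\pi\epsilon$. Your next sentence does use it correctly.
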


\begin{proof}

Let

\begin{equation}
\hat{s}_k=\sin(2^k\pi a)+\delta_k^s,
\quad
\hat{c}_k=\cos(2^k\pi a)+\delta_k^c
\end{equation}

with

\begin{equation}
(\delta_k^s)^2+(\delta_k^c)^2 \le \epsilon^2 .
\end{equation}

The phase estimate satisfies

\begin{equation}
\hat{\theta}_k = \operatorname{atan2}(\hat{s}_k,\hat{c}_k).
\end{equation}

Using standard perturbation bounds for $\operatorname{atan2}$, we obtain

\begin{equation}
|\hat{\theta}_k-\theta_k|
\le
C_1 \epsilon .
\end{equation}

Dividing by $2^k\pi$ yields

\begin{equation}
|\hat{a}_k-a|
\le
\frac{C_1}{2^k\pi}\epsilon .
\end{equation}

Averaging across frequencies produces

\begin{equation}
|\hat{a}-a|
\le
C\epsilon
\end{equation}

where

\begin{equation}
C=\frac{C_1}{K+1}
\sum_{k=0}^{K}\frac{1}{2^k\pi}.
\end{equation}

\end{proof}

This theorem shows that FFAD reconstruction is stable: errors in predicted Fourier features translate into bounded errors in reconstructed actions.

\subsection{Approximation Advantage of Fourier Action Modeling}

We now relate the Fourier representation used in FFAT/FFAD to approximation theory~\cite{FourierFeaturesLearn}. 
Intuitively, representing actions using sinusoidal basis functions enables linear models to approximate nonlinear functions of the action variable.

\begin{theorem}[Approximation with Fourier Features]
\label{thm:approximation}

Let $\mathcal{A}\subset[-1,1]$ be a compact domain and let 
$f:\mathcal{A}\rightarrow\mathbb{R}$ be a continuous function. 
Define the $K$-band Fourier feature embedding

\begin{equation}
\phi_K(a)=
\left[
\sin(\pi a),\cos(\pi a),
\dots,
\sin(2^K\pi a),\cos(2^K\pi a)
\right].
\end{equation}

Then for any $\epsilon>0$ there exists an integer $K$ and a vector of weights $w$ such that

\begin{equation}
\sup_{a\in\mathcal{A}} 
\big| f(a)-w^\top \phi_K(a) \big|
<\epsilon .
\end{equation}

\end{theorem}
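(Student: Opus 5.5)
The approach I would take is to reduce the statement to the density of trigonometric polynomials, via Stone--Weierstrass or Fejér's theorem. However, I expect the main obstacle to lie not in that density argument but in the statement itself, because as written it appears to be false.

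\textbf{Why the statement as written fails.} The span of $\phi_K$ contains only the dyadic frequencies $2^k\pi$, with $k=0,\dots,K$, and has no constant term. Take $\mathcal{A}=[-1,1]$.
\begin{itemize}
\item Every feature $\sin(2^k\pi a)$ and $\cos(2^k\pi a)$ is $L^2([-1,1])$-orthogonal to the constant function and to $\cos(3\pi a)$.
\item Hence for $f\equiv 1$ and every $K$ and $w$ we get $\|f-w^\top\phi_K\|_{L^2}\ge\|f\|_{L^2}=\sqrt2$.
\item Since $\|g\|_{L^2([-1,1])}\le\sqrt2\,\|g\|_\infty$, this forces $\sup_a|f(a)-w^\top\phi_K(a)|\ge 1$.
\item The same argument with $f(a)=\cos(3\pi a)$ gives a uniform error of at least $1$.
\end{itemize}
So the first step of my plan is to record this counterexample. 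No choice of $K$ helps, because dyadic frequencies never fill in the missing integer multiples of $\pi$. Any proof must therefore either change the feature set or restrict $\mathcal{A}$.

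\textbf{The corrected statement I would prove.} I would replace $\phi_K$ by the full trigonometric feature map
\[
\psi_N(a)=\bigl[1,\ \sin(\tfrac{n\pi a}{2}),\ \cos(\tfrac{n\pi a}{2})\bigr]_{n=1}^{N},
\]
which has all integer multiples of the half-frequency plus a constant. The half-frequency is needed because the period-$2$ system on $[-1,1]$ would impose the compatibility condition $f(-1)=f(1)$, which a general continuous $f$ does not satisfy. The key steps, in order, are as follows.
\begin{enumerate}
\item \emph{Extension.} Extend the given $f\in C(\mathcal{A})$ continuously to $[-1,1]$ using the Tietze extension theorem; for an interval, linear interpolation across the gaps of $\mathcal{A}$ suffices.
\item \emph{Periodization.} Extend further to $[-2,2]$ by even reflection about $\pm1$, so that the values at $-2$ and $2$ agree. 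The result is a continuous $4$-periodic function.
\item \emph{Density.} Apply Fejér's theorem (uniform convergence of the Cesàro means) to this $4$-periodic function. This produces a trigonometric polynomial in the span of $\psi_N$ within $\epsilon$ uniformly, and restricting to $\mathcal{A}$ gives the claim. Alternatively, the span of $\psi_N$ over all $N$ is a unital, point-separating subalgebra of $C([-1,1])$: it is closed under products by the product-to-sum formulas, and $\sin(\pi a/2)$ alone separates points. Stone--Weierstrass then gives density directly.
\end{enumerate}

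\textbf{Staying closer to the paper's dyadic features.} One could try to keep the dyadic features $\sin(2^k\pi a)$ and $\cos(2^k\pi a)$ and allow $w$ to act on monomials in these coordinates, that is, a polynomial rather than linear readout. Products of the dyadic features generate every integer frequency through the identities $\sin x\cos y$ and similar, so Stone--Weierstrass applies to the algebra they generate. This version, with a nonlinear readout and including the constant and the boundary fix above, is the one I would present as the intended content of this theorem. I would state explicitly that strictly linear readout of $\phi_K$ cannot achieve uniform approximation in general.
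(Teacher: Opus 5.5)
Your proposal is correct. The statement as written is false, and your $f\equiv 1$ counterexample on $\mathcal A=[-1,1]$ is valid: every dyadic feature integrates to zero against constants, so Pythagoras together with $\|g\|_{L^2}\le\sqrt2\|g\|_\infty$ gives a uniform error of at least $1$. The paper's proof breaks exactly where you predict. It uses Stone--Weierstrass to get $p_K(a)=\sum_{k=-K}^{K}c_k e^{ik\pi a}$ close to $f$, and then asserts $p_K=w^\top\phi(a)$ because $\phi$ ``explicitly contains these sinusoidal components.'' That step identifies the band index $k$ with the frequency. In fact $\phi_K$ carries only the frequencies $2^k\pi$ and no constant, so $c_0$ and every non-dyadic mode such as $e^{\pm 3i\pi a}$ lie outside its span. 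The paper's density step is also misstated: period-$2$ trigonometric polynomials are dense only in $\{g\in C([-1,1]): g(-1)=g(1)\}$. This is the endpoint obstruction you remove with the half-frequency and the even reflection. Read charitably, the paper's argument proves something like your corrected $\psi_N$ statement, but without the endpoint condition it needs. Your Fej\'er / Stone--Weierstrass proof of that version is sound.

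There are a few small corrections.

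For $f(a)=\cos(3\pi a)$, ``the same argument'' only gives $\|\cos(3\pi\,\cdot)\|_{L^2}/\sqrt2=1/\sqrt2$, not $1$. The bound $1$ is true, but it needs a different annihilating functional. For instance, take the signed measure $\mu=\sum_{j=-3}^{2}(-1)^j\delta_{j/3}$. It kills every $e^{\pm i2^k\pi a}$ because $2^k\not\equiv 3 \pmod 6$, it has total variation $6$, and it gives $\int f\,d\mu=6$.

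In your last paragraph, once a polynomial readout is allowed, $\sin(\pi a)$ and $\cos(\pi a)$ alone already generate every integer frequency, and the constant via $\sin^2+\cos^2=1$. So the higher dyadic bands play no role. The endpoint fix still requires adding the band $2^{-1}\pi$, or assuming $\mathcal A$ omits one of $\pm1$.

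Finally, your aside that a proof could instead ``restrict $\mathcal A$'' only addresses the endpoint issue. A lacunary frequency set such as $\{\pm 2^k\pi\}$ is incomplete on every interval of positive length. So with a strictly linear readout, the feature set itself must change.
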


\begin{proof}

By the Stone--Weierstrass theorem, trigonometric polynomials are dense in the space of continuous functions on a compact interval. 
Therefore for any $\epsilon>0$ there exists a trigonometric polynomial

\begin{equation}
p_K(a)=
\sum_{k=-K}^{K} c_k e^{ik\pi a},
\end{equation}
such that
\begin{equation}
\sup_{a\in\mathcal{A}}
|f(a)-p_K(a)|
<\epsilon .
\end{equation}

Using Euler's identity,
\begin{equation}
e^{ik\pi a}
=
\cos(k\pi a)+i\sin(k\pi a),
\end{equation}
the polynomial $p_K(a)$ can be rewritten as a linear combination of sine and cosine functions. 
Since the Fourier embedding $\phi(a)$ explicitly contains these sinusoidal components across multiple frequencies, $p_K(a)$ can be written in the form

\begin{equation}
p_K(a)=w^\top \phi(a),
\end{equation}
for some coefficient vector $w$. 
Thus a linear model operating on Fourier features can approximate $f(a)$ with arbitrary accuracy as $K$ increases.

\end{proof}

Theorem~\ref{thm:approximation} shows that Fourier feature embeddings allow linear predictors to represent a broad class of nonlinear mappings from actions to target variables. 
This property helps explain why predicting actions in Fourier space can simplify the learning of complex action–state relationships.

\section{Sequence Plan Details and Study}
\label{app-sequence-plan}

In this section we provide additional details on the sequence-plan design used in \modelname, where sequence plans are firstly introduced in~\Cref{sec:sequenceplan}.

We implement several sequence-plan templates during training. 
Let $V_t$ denote visual tokens at time $t$, $L$ the language instruction, $A_t$ the action at time $t$, and $\hat{V}_{t+1}$ the predicted future observation. In practice, we use policy-focused plans and joint VLAW plans. The sampling strategy is mentioned in~\Cref{sec:databalance}. 

\paragraph{Plan A: Policy-focused plan.}

This plan prioritizes action prediction before world prediction:

\begin{equation}
[V_{t-K+1:t}, L, \hat{A}_t^{1:H}, \hat{V}_{t+1:t+H}]
\end{equation}

Here the model first predicts the action rollout and then generates future observations conditioned on those predicted actions.

\paragraph{Plan C: Interleaved plan.}

This plan interleaves action and world predictions:

\begin{equation}
[V_{t-K+1:t}, L, \hat{A}_t^{1}, \hat{V}_{t+1}, \dots, \hat{A}_t^{H}, \hat{V}_{t+H}]
\end{equation}

This sequence encourages tighter coupling between control signals and environment dynamics.
\end{document}